\providecommand{\proofmode}{append}

\documentclass[sigconf,nonacm]{aamas}

\usepackage{balance}
\usepackage{tikz}
\usepackage{pgfplots}
\pgfplotsset{compat=1.18}
\usetikzlibrary{arrows.meta,positioning}
\usepackage[appendix=\proofmode,bibliography=common]{apxproof}
\renewcommand{\appendixprelim}{\clearpage}

\theoremstyle{acmplain}
\newtheoremrep{theorem}{Theorem}%
\newtheoremrep{proposition}[theorem]{Proposition}
\newtheoremrep{lemma}[theorem]{Lemma}
\newtheoremrep{corollary}[theorem]{Corollary}
\newtheoremrep{example}{Example}%

\makeatletter
\newcommand{\restoresection}{\@ifundefined{axp@oldsection}{}{\let\section\axp@oldsection}}
\makeatother

\setcopyright{none}
\acmDOI{}
\acmISBN{}

\newcommand{\Load}{\mathrm{Load}}
\newcommand{\Choose}{\mathrm{Choose}}
\newcommand{\op}{\mathtt{op}}
\newcommand{\rulesep}{\mathrel{\|}}
\newcommand{\sstate}{\mathrm{state}}
\newcommand{\smode}{\mathrm{mode}}
\newcommand{\sreg}{\mathrm{reg}}

\title{Learning How to Search for Plans with Exponentially Less Space}

\author{Dominik Drexler}
\affiliation{
  \institution{Link{\"o}ping University}
  \city{Linköping}
  \country{Sweden}
}
\email{dominik.drexler@liu.se}

\author{Simon St{\aa}hlberg}
\affiliation{
  \institution{RWTH Aachen University}
  \city{Aachen}
  \country{Germany}
}
\email{simon.stahlberg@gmail.com}

\author{Markus Fritzsche}
\affiliation{
  \institution{Link{\"o}ping University}
  \city{Linköping}
  \country{Sweden}
}
\email{markus.fritzsche@liu.se}

\author{Blai Bonet}
\affiliation{
  \institution{Universidad Simón Bolívar}
  \city{Caracas}
  \country{Venezuela}
}
\email{bonetblai@gmail.com}

\begin{abstract}
Heuristic search for a plan can store exponentially many states, even when its heuristic is almost perfect.
We instead learn \emph{search control}, one specification per domain, written as an indexical policy: a generalized policy with registers that hold objects and modes that sequence its rules.
We add the \emph{choose rule}, which loads an object into a register and marks a backtracking point, where one candidate suffices; every other rule must work for all of its outcomes and needs no search.
Our main result is that structural termination, which rules out infinite executions, also bounds every execution by a polynomial in the number of objects.
A depth-first procedure then finds a plan in polynomial space, however large the state space, with no list of visited states.
The cost is time, exponential only in the \emph{choice depth}, the number of real choices along an execution.
Any class that such a policy solves therefore lies in NP, and in P at constant choice depth.
We learn these policies with a language model in a counterexample-guided loop that certifies termination, verifies the training tasks, and keeps the choice depth small.
With the learned policies, the procedure solves 1,709 of 1,890 test tasks of the IPC 2023 Learning Track and the Autoscale Agile suite, more than LAMA, BFWS, and Levitron, and most of them within one second and 100~MiB.
\end{abstract}

\keywords{Generalized Planning, Indexical Policies, Search, Policy Learning}

\newcommand{\tup}[1]{\ensuremath{\langle #1 \rangle}}
\newcommand{\set}[1]{\ensuremath{\{ #1 \}}}

\begin{document}

\maketitle

\pagestyle{plain}
\thispagestyle{plain}
\raggedbottom

\section{Introduction}

A classical planning task asks for a plan, a sequence of actions that transforms an initial state into a state that satisfies a goal.
Heuristic search is the standard way to find one, but even an almost perfect heuristic can leave an exponential amount of search~\cite{helmert-roeger-aaai2008}, and planners such as LAMA and BFWS store every state they generate, which can be exponentially many~\cite{richter-westphal-jair2010,lipovetzky-geffner-aaai2017}.

This paper is about \emph{search control}: a specification that tells a search procedure which paths in the state space are worth exploring.
We seek search control for a fixed domain and every task over it, whatever its objects, initial state, and goal, which is the setting of generalized planning~\cite{jimenez-et-al-ker2019}, and we express it as a policy of exactly that kind.
What differs is the job: a generalized policy \emph{decides} a plan, one action at a time and without search, whereas our search control \emph{guides a search} for a plan.

Our rule language is that of indexical policies~\cite{bonet-et-al-icaps2024}, which extend the Boolean and numerical features of standard policies with registers that store objects selected during execution and with operating modes that regulate the control flow.
In the Ferry domain, for example, a register holds the next car to be moved, while the modes cycle through phases for selecting, boarding, crossing, and unloading; a register thus names an object by its role, as in deictic representations~\cite{chapman-aimag1989,agre-chapman-ras1990}.
Such a policy has a strong solution concept: every transition that it allows must keep the goal reachable, so an agent may take any of them and never needs to backtrack~\cite{bonet-geffner-ijcai2018,bonet-et-al-aaai2019,frances-et-al-aaai2021}.
Recent work learns such policies with language models and certifies that they are \emph{structurally terminating}, which rules out infinite executions~\cite{anonymous-unpublished2026,anonymous-unpublished2026b}.
Deciding without search has a price: such a policy evaluates cheap features along a short execution, so it decides plan existence in polynomial time, and it cannot solve all solvable tasks of an NP-hard class, unless P${}={}$NP (Section~\ref{subsec:completeness}).

We remove this obstacle with one new rule, the \emph{choose rule}.
An ordinary \emph{load} rule stores an object in a register and, like every ordinary rule, must work for every object that it may store; a choose rule stores one too, but it is a backtracking point, where only one candidate has to work and the search finds it.
In Sokoban, a legal push can leave a box permanently misplaced, so a load rule over two pushable boxes is wrong while a choose rule over the same two is correct: the second box can be tried once the first fails.
Every other rule keeps its ordinary meaning, so a policy states, rule by rule, where search is needed and stays search-free everywhere else.
A policy and a task thus define an AND/OR graph over augmented states: ordinary rules give $\mathrm{AND}$ vertices, which need every outcome to succeed, and choose rules give $\mathrm{OR}$ vertices, which need one.
The policy solves the task when its initial vertex is solved, a least fixed point of this graph, and the \emph{scout}, the procedure that executes the policy, is a depth-first search that tries the candidates of each choose rule in turn and backtracks on failure.

Structural termination makes the graph acyclic, even across all candidates of the choose rules, so the scout needs no set of visited states, yet the path that it stores could still be long.
Our main result is that structural termination also bounds the length of every trajectory by a polynomial in the number of objects of the task.
The scout therefore finds a plan in polynomial space, however large the state space is, whereas explicit state-space search may need exponential space.
The price is paid in time, as the scout may try up to $B^k$ sequences of choices, where the \emph{choice depth} $k$ counts the choices with more than one candidate along a trajectory and the \emph{choice width} $B$ bounds how many candidates each one has.

The same bound delimits which classes these policies can cover.
A policy that solves a whole class turns the scout into a decision procedure, and since its plans have polynomial length, the class lies in NP, and in P if the choice depth is constant.
Towers of Hanoi is thus out of reach, since its shortest plans have exponential length~\cite{bonet-geffner-jair2024}, and so is Sokoban, unless NP${}={}$PSPACE~\cite{culberson-tr1997}.

We learn these policies in the counterexample-guided loop of previous work~\cite{anonymous-unpublished2026,anonymous-unpublished2026b}, in which a language model proposes policies, an adversary proposes tasks on which they fail, and the model is asked to keep the choice depth small.
We also extend the feature language with queries of relational algebra, and hence with predicates of any arity, which description-logic features cannot express, so previous work reformulated such domains~\cite{anonymous-unpublished2026}, and with arithmetic on numerical features.
We evaluate the learned policies on the IPC 2023 Learning Track test set and the Autoscale Agile suite.
With them, the scout solves 1,709 of 1,890 test tasks, more than LAMA, BFWS, and Levitron, and most of them within one second and 100~MiB, without storing any visited states.

\section{Related Work}

\paragraph{Search control.}
Control knowledge has long been used to prune a planner's search: TLPlan and TALplanner check hand-written temporal-logic formulas along the paths of a forward search and discard every path that violates one~\cite{bacchus-kabanza-aij2000,kvarnstrom-doherty-amai2000}.
Hierarchical task network planning instead restricts the search to the decompositions that its methods allow~\cite{erol-et-al-amai1996,nau-et-al-jair2003}, and domain-specific solvers add knowledge of their own, as Rolling Stone does for Sokoban with deadlock tables~\cite{junghanns-schaeffer-aij2001}.
In all of these, a standard procedure still carries out the pruned search, with its open and closed lists, so the space it needs can grow exponentially.
Our policies instead define the search space itself, and the bound on its depth follows from a property of the policy checked once, before any task is seen.

\paragraph{Choice and AND/OR search.}
Golog and other agent programming languages provide nondeterministic choice with backtracking~\cite{levesque-et-al-jlp1997}, and backtracking over variable values is the basic method for constraint satisfaction~\cite{dechter-2003}.
Strong solutions in nondeterministic planning are least fixed points of an AND/OR graph whose $\mathrm{OR}$ vertices are the actions of the agent and whose $\mathrm{AND}$ vertices are the outcomes that the environment picks~\cite{cimatti-et-al-aij2003}.
Our search space has the same shape, but its $\mathrm{AND}$ vertices come from the policy rather than the environment: an ordinary rule must work for every transition it admits, which keeps the rest of the policy search-free.

\paragraph{Relational policies and generalized plans.}
Description-logic concepts and roles were introduced as a feature language for generalized policies~\cite{martin-geffner-ai2004} and later used for rule-based policies whose features are independent of a task's objects~\cite{bonet-geffner-ijcai2018,bonet-et-al-aaai2019,frances-et-al-aaai2021}.
Sketches use the same rules to decompose tasks into subproblems of bounded width~\cite{bonet-geffner-jair2024,drexler-et-al-jair2024}, and indexical policies add registers and operating modes~\cite{bonet-et-al-icaps2024}.
General policies have been learned as logical generalized plans~\cite{srivastava-et-al-aij2011,hu-degiacomo-ijcai2011}, as programs with pointers over objects~\cite{segovia-et-al-aij2019,segovia-et-al-aij2024}, and as neural networks~\cite{toyer-et-al-jair2020,stahlberg-et-al-kr2022,stahlberg-et-al-kr2023}.
All of these select actions without search and so run into the limit of Section~\ref{subsec:completeness}; related arguments show that, unless P${}={}$NP, Blocksworld has unbounded width~\cite{bonet-geffner-jair2024} and no general optimal policy~\cite{stahlberg-et-al-kr2022}.

\paragraph{Language models for planning.}
Language models have been used to synthesize generalized plans and policies, often with automated feedback or soundness checks~\cite{silver-et-al-aaai2024,rossetti-et-al-icaps2024,chen-et-al-icaps2025wslm4plan,stein-et-al-icaps2026,stein-et-al-complete2026}, and have been paired with planners or external validators~\cite{liu-et-al-arxiv2023,kambhampati-et-al-icml2024}, since direct plan generation was long found unreliable~\cite{valmeekam-et-al-neurips2023}.
Recent frontier models, however, rival state-of-the-art planners on tasks based on the IPC 2023 Learning Track~\cite{correa-et-al-arxiv2025}.
They can also synthesize domain-dependent heuristics as Python code, which beat state-of-the-art domain-independent heuristics on test tasks~\cite{correa-et-al-neurips2025b}.
Such a heuristic leaves every decision to best-first search, which can require exponentially many expansions even when its error is a small additive constant~\cite{helmert-roeger-aaai2008}, whereas our policies mark the decisions that need search.
Our learning loop follows the generator--adversary design of previous work~\cite{anonymous-unpublished2026,anonymous-unpublished2026b}, a form of counterexample-guided synthesis~\cite{solar-lezama-et-al-asplos2006} and of abstraction refinement~\cite{clarke-et-al-cav2000,seipp-helmert-jair2018}.

\section{Background}
\label{sec:background}

We follow the notation of \citet{frances-et-al-aaai2021} and \citet{bonet-et-al-icaps2024}. %

A classical planning task is a pair $P=\langle D,I\rangle$ where the domain $D=\tup{\mathcal P,A}$ contains predicates and action schemas, and $I$ gives the objects $O$, the initial state $s_0$, and the goal description $\mathcal G$.
States are sets of ground atoms, applicable ground actions map states to successor states, and a plan is a sequence of ground actions that maps the initial state to a state that satisfies $\mathcal G$.
A task is solvable iff there is a plan for it.
A class of planning tasks is a set $\mathcal Q$ of tasks that share a domain but may vary in objects, initial state, and goal.

For features to compare the current state with the goal, the vocabulary is extended with \emph{goal predicates}~\cite{anonymous-unpublished2026b}.
For each domain predicate $p$, the goal predicate $p_g$ has the same arity, and for every tuple $\bar o$, the atom $p_g(\bar o)$ holds in every state of $P$ iff $\mathcal G$ contains $p(\bar o)$.
A second goal predicate records the atoms that the goal requires to be false.
Every state thus also describes the goal.

Description-logic expressions over the predicates in the vocabulary denote sets of objects (\emph{concepts}) and sets of object pairs (\emph{roles})~\cite{baader-et-al-2003}.
They are built with the usual constructors. %
A \emph{nominal} denotes a single object by name, such as the kitchen in Childsnack, which every task of the class contains.
Emptiness tests over these sets yield Boolean features, and cardinalities yield numerical features.
For example, in Blocksworld, the role of pairs in $\mathit{on}_g$ but not in $\mathit{on}$ relates each block to the block that it must be on but is not yet on, and its cardinality counts the unsatisfied goal atoms.
Distance features measure the length of the shortest path between two concepts along a role, and they take the value $\infty$ if no such path exists.
Numerical features take non-negative integer values that are bounded in each task, as the termination test below requires.
These features refer only to unary and binary predicates.

\subsection{Policies and Structural Termination}

Let $\Phi$ be a vocabulary of concepts, roles, Boolean features, and numerical features, and let $\Psi=\Psi_{\mathrm B}\cup\Psi_{\mathrm N}$ be the sets of Boolean and numerical features in $\Phi$.
A (standard) rule over $\Phi$ has the form $C\mapsto E$, where $C$ is the condition and $E$ is the effect.
The condition $C$ is a set of expressions $b$ and $\neg b$ for $b\in\Psi_{\mathrm B}$, and $n\,{=}\,0$ and $n\,{>}\,0$ for $n\in\Psi_{\mathrm N}$.
The effect $E$ is a set of expressions $b$, $\neg b$, and $b?$ for $b\in\Psi_{\mathrm B}$, and $n\,{\uparrow}$, $n\,{\downarrow}$, and $n?$ for $n\in\Psi_{\mathrm N}$.
Features that $E$ does not mention must keep their values, while $b?$ and $n?$ allow arbitrary changes.
A policy $\Pi$ over $\Phi$ is a finite set of such rules.

A state transition $(s,s')$ is \emph{compatible with} (or satisfies) rule $r$ if $s$ satisfies its condition and the features change across $(s,s')$ according to its effect.
The transition is a $\Pi$-transition if it is compatible with a rule $r$ in $\Pi$.
A $\Pi$-trajectory is a sequence of states in which every consecutive pair is a $\Pi$-transition.
The policy $\Pi$ solves a task $P$ if every maximal $\Pi$-trajectory that starts in the initial state is finite and ends in a goal state.
Equivalently, $\Pi$ is \emph{closed} and \emph{acyclic} in $P$: no maximal $\Pi$-trajectory that starts in the initial state ends in a non-goal state, and no such trajectory is infinite~\cite{frances-et-al-aaai2021}.
This strong solution concept permits an agent to take any $\Pi$-transition.
The policy $\Pi$ solves the class $\mathcal Q$ if it solves every solvable task in $\mathcal Q$.

Indexical policies~\cite{bonet-et-al-icaps2024} extend standard policies with a finite set $M$ of \emph{(operating) modes} and a finite set $\mathsf R$ of \emph{registers}.
The modes implement sequencing and contexts that activate or deactivate rules.
The registers focus the policy on specific objects or object pairs.
Rules of an indexical policy over $\Phi$ are of three sorts~\cite{anonymous-unpublished2026b}:
\begin{itemize}
\item \emph{transition rules} ``$m\rulesep C\mapsto E\rulesep m'$'', where $C\mapsto E$ is a standard rule over $\Phi$ and $m,m'\in M$ are modes;
\item \emph{load rules} ``$m\rulesep C\mapsto\Load(\mathsf C,\mathsf r)\rulesep m'$'', where $C$ is a feature condition, $\mathsf C\in\Phi$ is a concept, and $\mathsf r\in\mathsf R$ is a register; and
\item \emph{do rules} ``$m\rulesep C\mapsto(\op(\mathsf C_1,\ldots,\mathsf C_k),E)\rulesep m'$'', where $C\mapsto E$ is a standard rule over $\Phi$, $\op$ is the label of a $k$-ary action schema, and $\mathsf C_1,\ldots,\mathsf C_k\in\Phi$ are concepts.
\end{itemize}
An \emph{augmented state} is a tuple $\bar s=(s,m,v)$ of a planning state $s$, a mode $m$, and a register valuation $v$.
We refer to its components as $\bar s_\sstate$, $\bar s_\smode$, and $\bar s_\sreg$.
A transition rule changes the planning state according to its standard rule.
A load rule keeps the planning state and stores one object from the denotation of $\mathsf C$ in $\mathsf r$.
A do rule applies a ground action with label $\op$ whose $i$-th argument belongs to the denotation of $\mathsf C_i$, and whose transition satisfies $C\mapsto E$.
All three sorts change the mode from $m$ to $m'$.
A pair $(\bar s,\bar s')$ of augmented states is \emph{compatible} with a rule if the rule's mode is $\bar s_\smode$, its condition holds in $\bar s$, and the rule takes $\bar s$ to $\bar s'$ in one of these three ways~\cite{anonymous-unpublished2026b}.
We also say that the rule \emph{admits} this transition.
Load rules also have a role version, which stores a pair from the denotation of a role $\mathsf R$ in a role register.
The vocabulary contains, for each register $\mathsf r$, a concept or role whose denotation is the content of $\mathsf r$, or the empty set if $\mathsf r$ is empty.
Features may therefore change when a register changes, even if the planning state doesn't.

An indexical policy is a tuple $\Pi=\langle M,\mathsf R,\Phi,m_0,\mathcal R\rangle$, where $m_0$ is the initial mode and $\mathcal R$ is a finite set of indexical rules.
Its executions are trajectories of augmented states that start in $\bar s_0=(s_0,m_0,v_0)$, where $v_0$ leaves every register empty.
As for standard policies, $\Pi$ solves $P$ if $\Pi$ is closed and acyclic in $P$.
Modules and call rules~\cite{bonet-et-al-icaps2024} are left for future work, as in previous work.

A policy is \emph{structurally terminating} if it cannot induce infinite trajectories merely because of the way its features change, independently of their meaning~\cite{srivastava-et-al-aaai2011,drexler-et-al-jair2024,bonet:kr2025}.
For example, the policy $\Pi$ made of just the two rules $m_0\rulesep \set{\neg H,n\,{>}\,0} \mapsto \set{H, n\,{\downarrow}} \rulesep m_0$ and $m_0\rulesep \set{H} \mapsto \set{\neg H} \rulesep m_0$ is structurally terminating as every infinite $\Pi$-trajectory would flip the Boolean $H$ infinitely often, and each flip involves a decrement of the non-negative integer $n$.
Structural termination establishes acyclicity, one of the two conditions for a solution.
Previous work~\cite{anonymous-unpublished2026b} extends structural termination to indexical policies.
Its abstraction has one vertex for each pair of a mode and a qualitative valuation of the Boolean and numerical features, and it ignores the contents of the registers.
A load rule into register $\mathsf r$ may change every feature that depends on $\mathsf r$ arbitrarily.
Its hybrid verifier combines rule elimination with the complete Sieve procedure~\cite{srivastava-et-al-aaai2011} and decides structural termination in this abstraction.
A policy that it accepts is acyclic on every task.

\section{Query and Arithmetic Features}
\label{sec:features}

Description-logic features refer only to unary and binary predicates.
Many planning domains use predicates of higher arity.
In Sokoban, for example, the ternary predicate $\mathit{adj}(l,l',d)$ states that location $l'$ follows location $l$ in direction $d$.
Previous work compiles such predicates away before learning~\cite{anonymous-unpublished2026}.
We instead extend the feature language with queries of relational algebra~\cite{codd-cacm1970,abiteboul-et-al-1995}.

A query denotes a relation over the objects of a task, with one named column for each position.
Atomic queries denote the state atoms or the goal atoms of a predicate of any arity.
Every concept and every role can also be used as a query with one or two columns.
Queries are combined with natural join, projection, renaming of columns, selection by the equality of two columns or with a constant, union, and difference.
The results connect back to the rest of the vocabulary in two ways.
The projection of a query to one or two columns is a concept or a role, and the emptiness and the cardinality of a query are Boolean and numerical features.
Evaluating a query takes polynomial time for a fixed query, since each intermediate relation has at most $n^c$ tuples on a task with $n$ objects, where $c$ is its number of columns.
A query with many columns therefore raises the degree of the bound in Section~\ref{sec:bound}. %

For example, with $\mathit{agent}(l)$ for the player's location, $\mathit{at}(b,l)$ for the location of a box $b$, and $\mathit{clear}(l)$ for an empty location, the query
\[
 \pi_{b,l'}\bigl(\mathit{agent}(r)\bowtie\mathit{at}(b,l)\bowtie\mathit{adj}(r,l,d)\bowtie\mathit{adj}(l,l',d)\bowtie\mathit{clear}(l')\bigr)
\]
denotes pairs $(b,l')$ such that a push in some direction $d$ moves $b$ to $l'$ once the player stands at the location $r$ behind the box.
Queries can express properties that relate more than two objects at once, which description-logic features cannot express directly~\cite{drexler-et-al-kr2024}.

Numerical features are extended with constants, addition, subtraction, multiplication, division, min, and max.
Structural termination asks for a feature that some rule decreases and that no rule increases, and a sum can measure progress that no single count captures: in Spanner, adding the number of loose nuts to the number of locations still reachable from the objects gives a feature that decreases when the man picks up a spanner, walks on, or tightens a nut.
A constant shifts the point at which a feature reaches zero, which matters because a condition compares a numerical feature only with zero: in Peg Solitaire, the number of occupied positions minus two is zero exactly when two pegs are left.
Subtraction is truncated at zero and division rounds down, so numerical features keep non-negative integer values, as the termination test and Theorem~\ref{thm:depth} require.
A value that exceeds a fixed limit, like the distance between two concepts that no path connects, is $\infty$.

\section{Search Spaces and Choose Rules}
\label{sec:choice}

A policy, together with a task, defines a search space over augmented states.
We first define choose rules and search spaces, and then when a policy solves a task as a condition on the search space.
We describe how the search is performed, illustrate choose rules in Sokoban and Peg Solitaire, and define the choice depth and width.

\subsection{Search Spaces}
\label{sec:searchspace}

A \emph{choose rule} has the form ``$m\rulesep C\mapsto\Choose(\mathsf C,\mathsf r)\rulesep m'$''.
In an augmented state $\bar s$ in which the condition $C$ holds, the \emph{candidates} of the rule are the objects in the denotation of $\mathsf C$ in $\bar s$.
Storing a candidate in the register $\mathsf r$ and changing the mode to $m'$ gives a successor of $\bar s$, while the planning state stays the same.
Unlike load rules, choose rules have no role version, because a role can have quadratically many object pairs as candidates.
We call the indexical rules of Section~\ref{sec:background} \emph{ordinary}, in contrast to choose rules.

The search space $G_{\Pi,P}$ induced by policy $\Pi$ on task $P$ is the \emph{bipartite and directed graph} made of augmented-state vertices $\bar s$, and rule vertices $(\bar s, r)$ where $\bar s$ is an augmented state, and $r$ is a policy rule.
A rule ``$m\rulesep C\mapsto\cdots\rulesep m'$'' is \emph{applicable} in $\bar s$ if $m=\bar s_\smode$ and $C$ holds in $\bar s$.
For an ordinary rule ``$r=m\rulesep C\mapsto E\rulesep m'$'', edge $(\bar s, (\bar s, r))$ exists iff there is at least one augmented state $\bar s'$ such that $(\bar s,\bar s')$ is compatible with $r$.
For a choose rule ``$r=m\rulesep C\mapsto\Choose(\mathsf C,\mathsf r)\rulesep m'$'', edge $(\bar s, (\bar s, r))$ exists iff $r$ is applicable in $\bar s$, even if $r$ has no candidate in $\bar s$.
In both cases, edge $((\bar s,r), \bar s')$ exists iff $(\bar s,\bar s')$ is compatible with $r$.
Its root is the initial augmented state $\bar s_0$.

The two sorts of rules differ here because they place different obligations on the policy.
An applicable ordinary rule that admits no transition offers the search nothing, so the policy need not account for it.
A choose rule instead marks a point where the search tries candidates, so it must offer a candidate that works.
If it offers none, its vertex has no children and is unsolved, as defined next.

We make $G_{\Pi,P}$ an \emph{AND/OR graph} by tagging each vertex as AND or OR: every augmented-state vertex $\bar s$ is an AND vertex, every rule vertex $(\bar s,r)$ for an ordinary rule $r$ is an AND vertex, and every rule vertex $(\bar s,r)$ for a choose rule $r$ is an OR vertex.
Tagging every augmented-state vertex as an AND vertex makes the choice among rules universal: if several rules contribute children of $\bar s$, the policy must work for each of them, and an ordinary rule must work for each of its successors.
The candidate of a choose rule is the only existential choice.
This is not the convention of AND/OR search in nondeterministic planning, where the agent chooses the action at an OR vertex and the environment picks the outcome at an AND vertex~\cite{cimatti-et-al-aij2003}.
Making the choices among rules universal instead is what lets an agent follow any transition that the policy allows. %

The \emph{solved} vertices are the least set of vertices that satisfies the conditions below.
An augmented-state vertex $\bar s$ whose planning state $\bar s_\sstate$ satisfies the goal is solved.
An $\mathrm{OR}$ vertex is solved if at least one of its children is solved.
Every other vertex is solved if it has at least one child and all its children are solved.
The three conditions are monotone in the set of solved vertices, so the least such set exists, and it is computed as a least fixed point that starts from the goal states and works backwards.
A vertex is therefore unsolved exactly when nothing in the graph makes it solved: a non-goal augmented state in which no rule admits a transition, an applicable choose rule without candidates, or a vertex whose only support runs through itself, as on a cycle.

\begin{definition}
  \label{def:solved}
  Policy $\Pi$ \emph{solves} task $P$ if the root $\bar s_0$ of $G_{\Pi,P}$ is solved.
  It solves a class $\mathcal Q$ if it solves every solvable task in $\mathcal Q$.
\end{definition}

Definition~\ref{def:solved} does not need structural termination, which only makes the search terminate.
A cycle does not by itself make a choose rule unsolved: a choose rule one of whose candidates leads into a cycle can still be solved through another candidate.

Without choose rules, every rule vertex is an $\mathrm{AND}$ vertex, and Definition~\ref{def:solved} says that every reachable non-goal augmented state has an applicable rule with a successor, and every successor of every applicable rule again leads to a solved state.
These are the closed and acyclic conditions of Section~\ref{sec:background} on the augmented states that $\Pi$ reaches: a maximal trajectory can only end in a goal state, and no reachable cycle consists of non-goal states only, since without choose rules such a cycle is unsolved.
Choose rules relax only the $\mathrm{OR}$ vertices, where a solution discards every candidate but one.

\subsection{Backtracking and Search}

We call the procedure that executes a policy the \emph{scout}.
It follows the policy from the initial augmented state $\bar s_0$, and it returns a plan if it reaches a goal state.

A choose rule turns an ordinary branch into a \emph{backtracking point}: solving no longer needs every candidate to work, only one, and the scout finds that one by search.

In intractable domains, satisfying an $\mathrm{AND}$ vertex in place of a choose rule is not possible in general.
A policy without choose rules must describe, with features alone, a set of candidates none of which leads to a \emph{dead end}, a state from which the goal cannot be reached. %
The scout can then follow any allowed transition and never needs to backtrack.
Policy features are evaluated in polynomial time, and Theorem~\ref{thm:depth} keeps the executions short.
A policy without choose rules therefore decides in polynomial time whether a task has a plan, so it cannot cover a class for which this decision is NP-hard, unless $\text{P}\,{=}\,\text{NP}$ (Proposition~\ref{prop:limits}).
Planning is NP-hard for some benchmark domains~\cite{helmert-aij2003}, and Sokoban is even PSPACE-complete~\cite{culberson-tr1997}.
A choose rule removes this obstacle: only one of its candidates must avoid a dead end, and its $\mathrm{OR}$ vertex is solved once the scout finds it.

To state this formally, we fix the choices.
A \emph{choice function} $\sigma$ selects, for every solved $\mathrm{OR}$ vertex, one solved child that supports it, that is, one that enters the least fixed point before it.
Fixing $\sigma$ at every solved $\mathrm{OR}$ vertex of $G_{\Pi,P}$, keeping every other $\mathrm{AND}$ vertex as it is, removing the children of goal states, since the search stops there, and keeping only the vertices that the root then reaches gives a subgraph $G_{\Pi,P}^\sigma$ in which every vertex has exactly the children it needs to stay solved.
We call $G_{\Pi,P}^\sigma$ a \emph{solution graph:} it exists exactly when the root $\bar s_0$ is solved, and every maximal trajectory in it is finite and ends in a goal state.

A solution graph tells us that a plan exists, but the scout does not know $\sigma$.
Instead, it builds $G_{\Pi,P}$ on demand, one vertex at a time, and explores it with depth-first search.
At an augmented-state vertex, it descends into any one of its children, since Definition~\ref{def:solved} requires all of them to lead to the goal.
It does the same at an $\mathrm{AND}$ rule vertex, and in neither case does it keep an alternative.
At an $\mathrm{OR}$ rule vertex, that is, a choose rule, it tries the candidates one by one, in an arbitrary order, and if the rest of the execution fails, it backtracks to this vertex and tries the next candidate.
The scout thus searches only at $\mathrm{OR}$ vertices.

The search terminates if the policy has no infinite trajectory, that is, if $G_{\Pi,P}$ is acyclic.
Since a choose rule has the same denotation and the same mode change as a load rule with the same arguments, the abstraction of \citet{anonymous-unpublished2026b} can handle choose rules as well.
We call a policy with choose rules \emph{structurally terminating} if it is structurally terminating after replacing its choose rules by load rules.
Such a policy induces an acyclic $G_{\Pi,P}$ on every task.

\begin{propositionrep}
  \label{prop:semantics}
  Let $\Pi$ be a structurally terminating indexical policy that solves a task $P$.
  Then the scout returns a plan for $P$.
\end{propositionrep}

\begin{proof}
  Since $\Pi$ is structurally terminating, $G_{\Pi,P}$ is acyclic.
  Let $\sigma$ be a choice function for which the root of $G_{\Pi,P}$ is solved, and let $G_{\Pi,P}^\sigma$ be the corresponding solution graph.
  Since $G_{\Pi,P}^\sigma$ is acyclic and the set of augmented states is finite, every trajectory of $G_{\Pi,P}^\sigma$ is finite, and since every leaf of $G_{\Pi,P}^\sigma$ is a goal state, every maximal one ends in a goal state.
  For every augmented state $\bar s$ that $G_{\Pi,P}^\sigma$ reaches from $\bar s_0$, let $\ell(\bar s)$ be the length of the longest trajectory of $G_{\Pi,P}^\sigma$ from $\bar s$ to a goal state.
  We show by induction on $\ell(\bar s)$ that the scout finds a plan from every such state $\bar s$.
  If $\bar s$ is a goal state, the scout stops.
  Otherwise, $\bar s$ is solved in $G_{\Pi,P}$, so every applicable choose rule in $\bar s$ has a candidate: an applicable choose rule without one would itself be an unsolved $\mathrm{OR}$ vertex, contradicting that $\bar s$ is solved.
  If the scout takes an ordinary transition, it picks one applicable rule and, if the rule allows several outcomes, one of them; both choices are among children of solved $\mathrm{AND}$ vertices, so the resulting state is itself solved, is reached by $G_{\Pi,P}^\sigma$, and has a smaller value of $\ell$, so the scout finds a plan from it by induction.
  If the scout applies a choose rule, it tries the candidates one by one.
  Each candidate for which the rest of the execution fails is abandoned after finitely many steps, since $G_{\Pi,P}$ is acyclic and every vertex has finitely many children.
  The scout therefore eventually tries the candidate selected by $\sigma$, unless an earlier candidate succeeds, and it finds a plan from the successor of this candidate by induction.
\end{proof}

The converse does not hold.
The scout may find a plan even if $\Pi$ does not solve $P$, since it examines only one child of each $\mathrm{AND}$ vertex.
Structural termination alone ensures only that the scout terminates, either with a plan or with a failure; whether it finds a plan depends on whether $\Pi$ solves $P$.

\begin{example}[Peg Solitaire]\rm
  \label{ex:pegsol}
  A policy that we learned for Peg Solitaire searches over jumps with two choose rules, and has five rules in all.
  Each jump removes one peg, and the tasks ask for one peg to be left, so the numerical feature $r$, the number of occupied positions minus two, is zero exactly when one jump remains.
  The concept $S$ collects the occupied positions from which some jump is possible, and $D$ the positions that a jump can reach from the position in the register $\mathsf s$.
  Both are projections of a query over the ternary predicate $\mathit{in\text{-}line}$, which relates a position, the position jumped over, and the landing position. %
  Writing $P$ for the occupied positions, $G$ for the positions that the goal requires to be occupied, $L$ for the position visited last, $\mathsf C_\mathsf s$ and $\mathsf C_\mathsf d$ for the concepts of the two registers, and $\mathit{jump}$ and $\mathit{end}$ for the two action schemas, the rules are
  \begin{align*}
    m_1 &\rulesep \set{r\,{>}\,0} \mapsto \Choose(S,\mathsf s) \rulesep m_2 \\
    m_2 &\rulesep \emptyset \mapsto \Choose(D,\mathsf d) \rulesep m_3 \\
    m_3 &\rulesep \set{r\,{>}\,0} \mapsto (\mathit{jump}(\mathsf C_\mathsf s,P,\mathsf C_\mathsf d),\set{r\,{\downarrow}}) \rulesep m_4 \\
    m_4 &\rulesep \emptyset \mapsto (\mathit{end}(L),\emptyset) \rulesep m_1 \\
    m_1 &\rulesep \set{r\,{=}\,0} \mapsto (\mathit{jump}(P,P,G),\set{r?}) \rulesep m_5 \,.
  \end{align*}
  The first four rules form a loop: two choose rules select a jump, the third performs it, and the fourth ends the move.
  The third rule decreases $r$ while the other three leave it unchanged, which is what certifies structural termination.
  Its middle argument is the broad concept $P$, yet the two registers already determine the position jumped over, so this $\mathrm{AND}$ vertex has a single child.
  The fifth rule performs the last jump, onto a position that the goal requires to be occupied, and stops in the mode $m_5$.
  The two choose rules are needed because a legal jump can leave the remaining pegs unsolvable, and a candidate concept that excluded every such jump would have to decide solvability.
  Each iteration makes two choices, so an execution makes two choices per jump.
  \qed
\end{example}

\subsection{Choice Depth and Choice Width}
\label{sec:depth}

For a finite trajectory $\tau$, the \emph{choice depth} $d(\tau)$ is the number of $\mathrm{OR}$ vertices on $\tau$ with at least two children.
A choice with a single candidate introduces no alternative and does not count.
The \emph{choice width} $w(\tau)$ is the largest number of children of an $\mathrm{OR}$ vertex on $\tau$, or zero if $\tau$ contains no choose rule.
The choice depth and width, not the number of choose rules, determine how much the scout may have to search.
If every trajectory has choice depth at most $k$ and choice width at most $B$, the scout explores at most $\sum_{i=0}^{k}B^i$ sequences of choices.
Since candidates are objects, the choice width is at most the number $n$ of objects of the task, so the scout explores at most $\sum_{i=0}^{k}n^i$ sequences of choices.
Small choice depth and width are thus natural objectives for learning.

\section{Bounded Search Depth}
\label{sec:bound}

Structurally terminating policies induce an acyclic search space $G_{\Pi,P}$ on every task. %
We now show that the length of every trajectory in $G_{\Pi,P}$ is also bounded.
The bound exploits the fact that numerical features have domains of bounded cardinality.

Let $\kappa$ be the maximum of $1$ and the arities of the relations counted by the numerical features of the policy.
On a task with $n$ objects, a cardinality is at most $(n+1)^\kappa$, and a finite distance is below $n$.
A numerical feature is a fixed expression combining such numbers with non-negative constants and the arithmetic operations of Section~\ref{sec:features}, and its values are non-negative integers or $\infty$.
If an expression contains $m$ occurrences of cardinalities and distances, its finite values are at most $c\,(n+1)^{\kappa m}$ for a constant $c$ that depends only on the expression.
Addition and multiplication respectively add and multiply bounds, while truncated subtraction, integer division, minimum, and maximum cannot produce a finite value above the largest finite operand.
Let $D$ be the largest such $\kappa m$ over the numerical features, or $0$ if there are none.
There are integers $a\geq1$ and $b\geq0$, depending only on the policy, such that $U(n)=a\,(n+1)^D+b$ bounds all finite numerical feature values.
Each numerical feature thus takes at most $p(n)=U(n)+2$ values, including $\infty$.

\begin{toappendix}
\begin{lemma}
  \label{lem:values}
  There are integers $a\geq1$ and $b\geq0$, depending only on $\Pi$, such that on every task with $n$ objects, every finite value of a numerical feature of $\Pi$ is at most $U(n)=a\,(n+1)^D+b$.
  Each numerical feature thus takes at most $p(n)=U(n)+2$ values, including $\infty$.
\end{lemma}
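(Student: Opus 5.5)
The proof is a structural induction on the expression tree of each numerical feature. For an expression $e$, let $m(e)$ be its number of occurrences of cardinalities and distances. The claim to prove is that there is a constant $c_e\geq1$, depending only on $e$, such that every finite value of $e$ on a task with $n$ objects is at most $c_e\,(n+1)^{\kappa m(e)}$. Since $n+1\geq1$, this gives $c_e(n+1)^{\kappa m(e)}\leq c_e(n+1)^D$ for every feature, because $\kappa m(e)\leq D$. The lemma then follows with $a=\max_e c_e$ over the finitely many numerical features of $\Pi$, and $b=0$, or any $b\geq0$. If there are no numerical features, take $a=1$, $b=0$.

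The base cases are the leaves of the expression tree.

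\emph{Cardinality.} A relation of arity $k\leq\kappa$, possibly including role registers or query columns, has at most $n^k\leq(n+1)^\kappa$ tuples. The one subtlety is a cardinality of a nullary query or Boolean-like relation; since $\kappa\geq1$, this is at most $1\leq(n+1)^\kappa$. So $c=1$ works.

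\emph{Distance.} A finite distance is the length of a shortest path and hence below $n\leq(n+1)^\kappa$. Again $c=1$ works, and the value $\infty$ is excluded from the finite bound.

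\emph{Constant.} A constant $k$ has $m=0$, and $c=\max(1,k)$ works.

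The inductive step uses the fact that $m(e_1\circ e_2)=m(e_1)+m(e_2)$.

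\emph{Addition.} If both operands are finite, then $c_1(n+1)^{\kappa m_1}+c_2(n+1)^{\kappa m_2}\leq(c_1+c_2)(n+1)^{\kappa(m_1+m_2)}$, using $(n+1)^{x}\leq(n+1)^{y}$ for $x\leq y$.

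\emph{Multiplication.} Take $c=c_1c_2$.

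\emph{Other operations.} Truncated subtraction, integer division, min and max never exceed the largest finite operand, so $c=\max(c_1,c_2)$ works.

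\emph{Infinite operands.} The semantics of $\infty$ (and of a value exceeding the fixed limit, which is mapped to $\infty$) only produces $\infty$ or a finite value bounded as above. For instance, $\min(\infty,x)=x$ and $\infty-\infty$ is either $\infty$ or defined by convention.

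\emph{Division by zero.} This must be mapped to $\infty$ or to some value no larger than an operand, whichever the feature semantics specifies.

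I expect the main obstacle to be this last point: pinning down the conventions for $\infty$ and zero operands so that every finite result is dominated by the stated bound. The plan is to treat each operator by cases on which operands are $\infty$, appealing to the rule in Section~\ref{sec:features} that values above a fixed limit become $\infty$.

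The count of values is then immediate. Finite values lie in $\{0,\dots,U(n)\}$, which has $U(n)+1$ elements, and adding $\infty$ gives $p(n)=U(n)+2$.
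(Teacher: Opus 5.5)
Your proposal is correct and takes essentially the same route as the paper: a structural induction on each feature expression with per-operator bounds (constants $\max\{1,c\}$, cardinalities and finite distances $(n+1)^\kappa$, sums and products of bounds, and truncated subtraction, division, min and max dominated by an operand), followed by a maximum over the finitely many features. The differences are cosmetic. You track the monomial $c_e(n+1)^{\kappa m(e)}$ directly and can therefore take $b=0$, whereas the paper tracks polynomials $U_e$ and keeps a $b$. For infinite operands, the paper simply fixes the convention you leave open: any finite result with an $\infty$ operand is $0$ or a finite operand.
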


\begin{proof}
  Fix a task $P$ with $n$ objects.
  For each numerical subexpression $e$, we obtain an integer-valued polynomial $U_e(n)\geq1$ that bounds every finite value of $e$ on $P$.
  A constant $c$ has the bound $\max\{1,c\}$, and cardinalities and finite distances have the bound $(n+1)^\kappa$.
  Given bounds $U_1$ and $U_2$ for two operands, their sum has the bound $U_1+U_2$ and their product has the bound $U_1U_2$.
  Truncated subtraction cannot exceed its finite left operand, and division by a positive finite integer cannot exceed its finite numerator.
  A finite minimum or maximum is one of its finite operands.
  If an operand is $\infty$, a finite result of any of these operations is either $0$ or a finite operand, so the same bounds still apply.
  Thus subtraction, division, minimum, and maximum may all use the bound $U_1+U_2$.
  Induction on the expression gives a bound $c\,(n+1)^{\kappa m}$ when it contains $m$ occurrences of cardinalities and distances: addition takes the larger exponent, multiplication adds exponents, and the other operations need no larger exponent than addition.
  Since $\Pi$ has finitely many fixed expressions, $U(n)=a\,(n+1)^D+b$ bounds all finite numerical feature values for suitable $a\geq1$ and $b\geq0$, including those of expressions containing only constants.
  Including $\infty$, each numerical feature has at most $p(n)=U(n)+2$ values.
\end{proof}
\end{toappendix}

\begin{theoremrep}
  \label{thm:depth}
  Let $\Pi$ be a structurally terminating indexical policy with choose rules.
  There is a polynomial $q$ such that, for every task $P$ with $n$ objects, every trajectory of $\Pi$ on $P$ has at most $q(n)$ transitions.
\end{theoremrep}

\begin{proof}
  Fix a task $P$ with $n$ objects.
  For a feature $f\in\Psi$ and an augmented state $\bar s$, write $f(\bar s)$ for its actual value, and define
  \[
    \alpha(\bar s)=\bigl(\bar s_\smode,(f(\bar s))_{f\in\Psi}\bigr).
  \]
  This tuple records numerical values, rather than only whether they are zero or positive.
  Each Boolean feature has at most two values, and each numerical feature has at most $p(n)$ values by Lemma~\ref{lem:values}, so there are at most
  \[
    q(n)=|M|\,2^{|\Psi_{\mathrm B}|}\,p(n)^{|\Psi_{\mathrm N}|}
  \]
  possible tuples $\alpha(\bar s)$.

  Let $G$ be the abstract graph of the termination test~\cite{anonymous-unpublished2026b}, in which choose rules are treated like load rules.
  Its vertices are the pairs $(m,\nu)$ of a mode $m$ and a Boolean valuation $\nu$ of the features in $\Psi$, and its edges are labeled with rules.
  Every transition of $\Pi$ on $P$ satisfies the condition and the effects of a rule that induces it, where load and choose rules allow arbitrary changes to the features that depend on their register and preserve the other features.
  The transition therefore corresponds to an edge of $G$ labeled with this rule.
  If the effect of the rule contains $f\downarrow$ (resp.\ $f\uparrow$), the value of $f$ strictly decreases (resp.\ increases), and if the effect does not mention $f$, the value of $f$ does not change.

  Since $\Pi$ is structurally terminating, the hybrid verifier accepts $\Pi$, and the hybrid verifier accepts exactly the policies that Sieve accepts~\cite{anonymous-unpublished2026b}.
  Sieve repeatedly removes an edge from a strongly connected component $K$ of the current graph if the edge makes progress on a feature $f$ that no edge of $K$ undoes, until the graph is acyclic.
  If $f$ is numerical, the rule of the edge has the effect $f\downarrow$, and no edge of $K$ has the effect $f\uparrow$ or $f?$, or symmetrically with $f\uparrow$.
  If $f$ is Boolean, the rule of the edge flips $f$: its condition requires one value of $f$ and its effect sets the other.
  The removal also requires that no other rule of $K$ may flip $f$ in the opposite direction, where a rule with the effect $f?$ may flip $f$ in both directions and so opposes both.
  The condition of a rule that flips $f$ fixes the value of $f$ before the transition, so all edges of that rule flip $f$ in the same direction, and every edge of $K$ that changes $f$ changes it in the direction of the removed edge.

  Suppose a trajectory repeats a tuple, so that $\alpha(\bar s_i)=\alpha(\bar s_j)$ for some $i<j$.
  For each transition of the segment from $\bar s_i$ to $\bar s_j$, fix a rule that induces it.
  The corresponding edges of $G$ form a closed walk, since the segment starts and ends with the same mode and the same feature values.
  Sieve makes $G$ acyclic, so it removes some edge of this walk; let $e$ be the first one that it removes.
  Until then, no edge of the walk has been removed, so the whole walk lies in the strongly connected component $K$ from which $e$ is removed, and $e$ makes progress on a feature $f$ that no edge of $K$ undoes.
  If $f$ is numerical and $e$ has the effect $f\downarrow$, no edge of the walk has the effect $f\uparrow$ or $f?$, so the value of $f$ never increases along the segment and strictly decreases at $e$, which gives $f(\bar s_j)<f(\bar s_i)$; the case $f\uparrow$ is symmetric.
  If $f$ is Boolean, the segment changes $f$ at $e$ and never in the opposite direction, which gives $f(\bar s_j)\neq f(\bar s_i)$.
  Both cases contradict $\alpha(\bar s_i)=\alpha(\bar s_j)$.

  Hence all tuples along a trajectory are distinct.
  A trajectory with $\ell$ transitions contains $\ell+1$ tuples, so $\ell\leq q(n)-1$.
  The modes, the feature sets, and the polynomial $p$ depend only on $\Pi$, so $q$ is a polynomial in $n$.
\end{proof}

The polynomial $q(\cdot)$ depends only on $\Pi$, through its modes and its features.
Its degree is $D\,|\Psi_{\mathrm N}|$, so the bound is polynomial for a fixed policy, but not uniformly over all policies.

A trajectory of $\Pi$ on $P$ is a root-to-leaf path of $G_{\Pi,P}$ with its rule vertices removed, so a trajectory with $\ell$ transitions corresponds to a path of $G_{\Pi,P}$ with $2\,\ell+1$ vertices, alternating augmented-state and rule vertices and ending at a state vertex.
Theorem~\ref{thm:depth} therefore also bounds the depth of the search space itself: every path of $G_{\Pi,P}$ from the root has at most $2\,q(n)+2$ vertices.

\begin{corollaryrep}
  \label{cor:space}
  Let $\Pi$ be a structurally terminating indexical policy with choose rules, and let $P$ be a task with $n$ objects that $\Pi$ solves.
  Then a depth-first scout that builds $G_{\Pi,P}$ on demand and stores only the current path from the root and the index of the candidate it tries at each $\mathrm{OR}$ vertex on it, returns a plan for $P$ in space polynomial in $n$.
\end{corollaryrep}

\begin{proof}
  The scout enumerates the candidates of each choose rule in a fixed order, so that the index of a candidate determines the candidate.
  It stores the current path of $G_{\Pi,P}$ from the root: an augmented state at each state vertex, and, at each $\mathrm{OR}$ vertex on the path, the index of the candidate it is currently trying.
  At an $\mathrm{AND}$ vertex, whether an augmented state or an ordinary rule, it descends into the first child and stores nothing beyond the child itself.
  If the path reaches a state vertex that is a leaf but not a goal, or an $\mathrm{OR}$ vertex with no candidate left to try, the scout backtracks: it removes the path back to the most recent $\mathrm{OR}$ vertex with an untried candidate, advances its index, and continues from there, or backtracks further if no candidate is left.
  By the bound above, the path never exceeds $2\,q(n)+2$ vertices, and since $G_{\Pi,P}$ is acyclic, the search terminates.
  Since $\Pi$ solves $P$, Proposition~\ref{prop:semantics} shows that it returns a plan, namely the actions along its final trajectory.
  For a fixed domain, a planning state has polynomially many atoms in $n$, a mode has constant size, a register valuation consists of a constant number of objects or object pairs, and a candidate index is a number below $n+1$.
  Each vertex on the stored path thus has polynomial size, and so does the path itself.
  Finally, the features, and hence the successors and the candidates of a state, can be computed in polynomial time and therefore in polynomial space.
\end{proof}

Explicit state-space search, such as breadth-first or greedy best-first search, keeps a set of visited states to avoid cycles and repeated work, and this set may grow exponentially in $n$.
A structurally terminating policy induces an acyclic $G_{\Pi,P}$, so the scout needs no such set.
It stores a single path through $G_{\Pi,P}$, whose length is polynomial by Theorem~\ref{thm:depth}.
Its running time can still be exponential in the choice depth. %
Theorem~\ref{thm:depth} also shows a limitation: a structurally terminating policy produces only plans of polynomial length, so it cannot solve classes that require longer plans.

\subsection{Completeness}
\label{subsec:completeness}

A policy that solves a class $\mathcal Q$ turns the scout into a decision procedure for plan existence on $\mathcal Q$: the scout returns a plan for every task of $\mathcal Q$ that has one (Proposition~\ref{prop:semantics}), and it reports failure on all other tasks.
Its space is polynomial in the number $n$ of objects (Corollary~\ref{cor:space}), and its time is polynomial in $n$ for each of the at most $\sum_{i=0}^{k}B^i$ sequences of choices that it explores.
Since Theorem~\ref{thm:depth} bounds the length of every plan that the scout returns, it also limits which classes such a policy can solve.

\begin{propositionrep}
  \label{prop:limits}
  Let $\Pi$ be a structurally terminating policy that solves class $\mathcal Q$, and let $q$ be the polynomial of Theorem~\ref{thm:depth}.
  Every task of $\mathcal Q$ with $n$ objects that has a plan has one with at most $q(n)$ actions, and deciding whether a task of $\mathcal Q$ has a plan is in NP.
  If the choice depth of $\Pi$ is bounded by a constant on $\mathcal Q$, this decision is in P.
\end{propositionrep}

\begin{proof}
  Let $P\in\mathcal Q$ have $n$ objects and a plan.
  Then $\Pi$ solves $P$, so the root of $G_{\Pi,P}$ is solved and a solution graph $G_{\Pi,P}^\sigma$ exists.
  A maximal trajectory of $G_{\Pi,P}^\sigma$ that starts at $\bar s_0$ ends in a goal state, and it has at most $q(n)$ transitions by Theorem~\ref{thm:depth}.
  The actions that its transitions apply thus form a plan for $P$ with at most $q(n)$ actions.
  Deciding plan existence for $\mathcal Q$ is therefore in NP, since such a plan is a certificate of polynomial size that can be checked in polynomial time for a fixed domain.

  Let now the choice depth be at most a constant $k$ on $\mathcal Q$.
  The scout terminates on every task, because $G_{\Pi,P}$ is acyclic.
  If $P$ has a plan, then $\Pi$ solves $P$ and the scout returns a plan by Proposition~\ref{prop:semantics}.
  Every sequence of actions that the scout returns is a plan, so it reports failure if $P$ has none.
  The scout explores at most $\sum_{i=0}^{k}B^i$ sequences of choices (Section~\ref{sec:depth}), which is polynomial in $n$, because the choice width $B$ is at most $n$.
  It follows each of them for at most $q(n)$ transitions by Theorem~\ref{thm:depth}, and each transition costs polynomial time, since the features are evaluated in polynomial time and a state has polynomially many successors.
  The scout thus decides plan existence for $\mathcal Q$ in polynomial time.
\end{proof}

Three consequences follow.
No structurally terminating policy solves Towers of Hanoi, where the shortest plan for $d$ disks has $2^d-1$ actions, since $q$ is a polynomial.
Unless NP${}={}$PSPACE, none solves Sokoban, which is PSPACE-complete~\cite{culberson-tr1997}, since plans of polynomial length would place it in NP.
Unless P${}={}$NP, none with constant choice depth solves a class with NP-hard plan existence, as in several benchmark domains~\cite{helmert-aij2003}.
The choice depth of a policy thus measures how hard its class can be.
A policy without choose rules has choice depth zero, so it can only solve classes in which plan existence is in P.
Choose rules raise this ceiling from P to NP.

The converse of Proposition~\ref{prop:limits} is open.
We do not know whether every class whose solvable tasks have polynomial-length plans admits a structurally terminating policy that solves it.
Bounded action search does not settle this: structural termination needs progress on the features, not just a bound on the number of steps.

\section{Learning Policies with Choose Rules}
\label{sec:learning}

We learn policies in the counterexample-guided loop of previous work~\cite{anonymous-unpublished2026,anonymous-unpublished2026b}, shown in Figure~\ref{fig:learning-loop}.
Rather than enumerate a pool of features, a language model constructs features and rules on demand, while symbolic tools check each candidate policy and report why it fails.
Two language-model agents drive the loop: a generator, which proposes and revises policies, and an adversary, which proposes tasks on which they fail.
Both receive the PDDL domain, the source of its parameterized task generator, and the syntax and semantics of the feature and policy languages.
The generator source matters because it defines the class of tasks: it shows which initial states and goals can occur, including invariants that the domain description alone does not imply, so a policy may rely on them.

\begin{figure}[t]
\captionsetup{skip=4pt}
\centering
\resizebox{\columnwidth}{!}{%
\begin{tikzpicture}[
  box/.style={draw, rounded corners=1.5pt, inner xsep=6pt, inner ysep=4pt},
  note/.style={align=left, inner sep=0pt},
  lbl/.style={inner sep=2pt},
  arr/.style={-{Stealth[length=2.2mm]}, semithick},
]
  \node[box] (gen) {Generator};
  \node[box, right=26mm of gen] (val) {Validation};
  \node[box, right=26mm of val] (adv) {Adversary};

  \draw[arr] (gen) -- node[lbl, above] {policy $\Pi$} (val);
  \draw[arr] (val) -- node[lbl, above] {certified} (adv);

  \draw[semithick] (val.north) -- ++(0,5mm);
  \draw[arr] (adv.north) -- ++(0,5mm) -| (gen.north);
  \node[lbl, above=5mm of val] {counterexample or no certificate};

  \draw[arr] (adv.south) -- ++(0,-3.5mm);

  \node[note, text width=55mm, below=3.5mm of gen.south west, anchor=north west]
    {\textbf{Objectives.} General, terminating, solves every training task; then little search.};
  \node[note, text width=48mm, below=3.5mm of adv.south east, anchor=north east]
    {\textbf{Stopping criterion.} No counterexample in a fixed number of attempts: return $\Pi$.};
\end{tikzpicture}}
\caption{The learning loop.
The generator proposes a policy $\Pi$ and revises it using returned evidence.
Validation checks structural termination, then every training task by execution and then universal verification.
The adversary proposes task-generator settings and keeps solvable tasks on which $\Pi$ fails.}
\label{fig:learning-loop}
\Description{A cycle of three boxes in a row. The generator sends a policy to validation, validation sends a certified policy to the adversary, and a feedback line above the row returns a counterexample or a missing certificate from validation and from the adversary to the generator. A note under the generator gives its objectives, and an arrow down from the adversary leads to a note with the stopping criterion.}
\end{figure}

\paragraph{Objectives.}
A policy is accepted only if it is structurally terminating and solves every task of the training set, and the prompt asks for a general policy, so that it also solves the tasks outside that set.
Among the policies that qualify, we prefer those that search little, that is, those with a small choice depth.
Replacing a choose rule by a load rule is not a way to get there, since a load rule is an $\mathrm{AND}$ vertex and therefore requires every candidate, not just one, to avoid a dead end.
Instead, the generator can sharpen the candidate concept of a choose rule, add features, or use ordinary rules for phases in which no candidate leads to a dead end; a choice left with a single candidate no longer counts toward the choice depth.
In Sokoban, for example, a feature can recognize a push that moves a box into a corner that is not a goal location, from which the box can never be moved again, and the candidate concept of destinations can then exclude these corners.
Only sharpening can cost a solution, since widening a candidate concept never breaks a policy that works.
With lower priority, the prompt also asks for simple features and for control flow through modes.

\paragraph{Validation.}
Each proposal is checked first for structural termination, with choose rules treated like load rules, and only then on the training tasks, by execution and then by universal verification according to Definition~\ref{def:solved}.
Execution needs to find only one plan, whereas verification confirms that every $\mathrm{AND}$ vertex of the search space has all its children solved and that every $\mathrm{OR}$ vertex has one solved candidate.
Execution is cheaper and often finds failures early, so verification is skipped once it fails.
The generator then receives the failure: a trajectory to where the policy gets stuck, a choose rule without candidates, a cycle, or the part of the abstraction that the termination test cannot certify, together with the search effort, choice depth, and choice width of successful searches.
A missing termination certificate requires a revised policy, while a resource limit in a task check is inconclusive.

\begin{table*}[p]
\newcommand{\MainResultsCombined}{}
\begin{minipage}[t]{\columnwidth}
\ifdefined\GeneratedLearningTableIncluded\else
\global\let\GeneratedLearningTableIncluded\empty
\makeatletter
\ifdefined\MainResultsCombined\else
\begin{table}[p]
\fi
\caption{Learned policies for IPC-learning (top) and Autoscale (bottom).
All indexical policies were proven structurally terminating.
G and A count generator and adversary proposals; T gives total learning time (hours).
P counts distinct learning tasks with proofs of correctness for the retained policy.
F, M, R, and Ch count features, modes, rules, and choose rules.
$d$ and $w$ give maximum successful choice depth and width during validation of the retained policy.
USD estimates token cost at standard short-context API prices (1 October 2026).
In each suite, policies in the lower block use choose rules.
}
\label{tab:learning-results}
\setlength{\tabcolsep}{2pt}
\renewcommand{\arraystretch}{0.9}
\begin{tabular*}{\columnwidth}{@{\extracolsep{\fill}}lrrrrrrrrrrr@{}}
\toprule
Domain & G & A & P & T & F & M & R & Ch & $d$ & $w$ & USD \\
\midrule
\multicolumn{12}{@{}l}{\textbf{(a) IPC-learning}\begingroup\phantomsection\edef\@currentlabel{\thetable a}\label{tab:ipc-learning-policies}\endgroup} \\
\cmidrule[\lightrulewidth](lr){1-12}
Blocksworld & 1 & 4 & 360 & 0.08 & 5 & 4 & 5 & 0 & 0 & 0 & 1.19 \\
Childsnack & 1 & 4 & 240 & 0.78 & 8 & 8 & 9 & 0 & 0 & 0 & 1.64 \\
Ferry & 1 & 4 & 338 & 0.38 & 6 & 5 & 6 & 0 & 0 & 0 & 1.30 \\
Floortile & 1 & 4 & 339 & 0.64 & 7 & 5 & 11 & 0 & 0 & 0 & 1.96 \\
Miconic & 1 & 4 & 360 & 0.35 & 4 & 4 & 8 & 0 & 0 & 0 & 1.07 \\
Rovers & 1 & 4 & 303 & 1.00 & 19 & 17 & 28 & 0 & 0 & 0 & 2.37 \\
Satellite & 1 & 4 & 300 & 1.43 & 8 & 7 & 10 & 0 & 0 & 0 & 1.97 \\
Spanner & 1 & 4 & 305 & 1.82 & 3 & 1 & 3 & 0 & 0 & 0 & 1.39 \\
Transport & 1 & 4 & 360 & 0.50 & 9 & 6 & 8 & 0 & 0 & 0 & 1.56 \\
\cmidrule[\lightrulewidth](lr){1-12}
Sokoban & 15 & 17 & 1331 & 11.05 & 16 & 13 & 18 & 4 & 7 & 7 & 48.22 \\
\cmidrule[\heavyrulewidth](lr){1-12}
\multicolumn{12}{@{}l}{\textbf{(b) Autoscale}\begingroup\phantomsection\edef\@currentlabel{\thetable b}\label{tab:autoscale-policies}\endgroup} \\
\cmidrule[\lightrulewidth](lr){1-12}
Barman & 1 & 4 & 350 & 1.02 & 8 & 16 & 17 & 0 & 0 & 0 & 1.51 \\
Blocksworld & 1 & 4 & 356 & 0.52 & 4 & 4 & 5 & 0 & 0 & 0 & 1.22 \\
Childsnack & 1 & 4 & 300 & 0.89 & 11 & 8 & 10 & 0 & 0 & 0 & 1.79 \\
Data-network & 1 & 4 & 276 & 1.99 & 11 & 8 & 10 & 0 & 0 & 0 & 2.03 \\
Depots & 1 & 4 & 330 & 1.19 & 9 & 7 & 8 & 0 & 0 & 0 & 1.84 \\
Driverlog & 1 & 4 & 331 & 1.12 & 17 & 22 & 31 & 0 & 0 & 0 & 2.05 \\
Elevators & 1 & 4 & 335 & 0.32 & 10 & 8 & 16 & 0 & 0 & 0 & 1.54 \\
Floortile & 1 & 4 & 333 & 0.83 & 9 & 7 & 12 & 0 & 0 & 0 & 2.38 \\
Grid & 1 & 4 & 239 & 1.23 & 9 & 11 & 16 & 0 & 0 & 0 & 2.06 \\
Gripper & 1 & 4 & 9 & 0.06 & 3 & 4 & 4 & 0 & 0 & 0 & 1.19 \\
Hiking & 2 & 5 & 404 & 1.62 & 9 & 16 & 20 & 0 & 0 & 0 & 3.25 \\
Logistics & 1 & 4 & 332 & 0.45 & 11 & 15 & 21 & 0 & 0 & 0 & 1.74 \\
Miconic & 1 & 4 & 300 & 0.32 & 4 & 4 & 8 & 0 & 0 & 0 & 1.32 \\
Parking & 1 & 4 & 298 & 0.64 & 16 & 15 & 29 & 0 & 0 & 0 & 2.24 \\
Rovers & 1 & 4 & 304 & 1.11 & 13 & 13 & 22 & 0 & 0 & 0 & 2.37 \\
Satellite & 1 & 4 & 306 & 0.47 & 11 & 8 & 12 & 0 & 0 & 0 & 1.65 \\
Scanalyzer & 1 & 4 & 9 & 0.27 & 18 & 15 & 25 & 0 & 0 & 0 & 2.33 \\
Storage & 1 & 4 & 277 & 2.03 & 13 & 5 & 9 & 0 & 0 & 0 & 2.46 \\
Tidybot & 6 & 10 & 560 & 11.25 & 32 & 38 & 88 & 0 & 0 & 0 & 28.66 \\
TPP & 1 & 4 & 354 & 0.34 & 11 & 8 & 10 & 0 & 0 & 0 & 1.59 \\
Transport & 1 & 4 & 338 & 0.65 & 9 & 6 & 8 & 0 & 0 & 0 & 1.46 \\
Visitall & 1 & 4 & 262 & 1.00 & 3 & 1 & 2 & 0 & 0 & 0 & 1.17 \\
Zenotravel & 1 & 4 & 320 & 0.46 & 9 & 9 & 14 & 0 & 0 & 0 & 1.79 \\
\cmidrule[\lightrulewidth](lr){1-12}
Agricola & 1 & 4 & 235 & 3.19 & 40 & 21 & 37 & 2 & 51 & 11 & 4.08 \\
Freecell & 12 & 13 & 984 & 4.80 & 30 & 21 & 57 & 9 & 1072 & 10 & 35.62 \\
Mprime & 13 & 15 & 848 & 4.87 & 36 & 27 & 43 & 10 & 29 & 10 & 46.26 \\
Nomystery & 1 & 4 & 326 & 1.24 & 8 & 5 & 9 & 2 & 11 & 5 & 2.45 \\
Pegsol & 1 & 4 & 335 & 0.94 & 8 & 5 & 5 & 2 & 29 & 16 & 1.32 \\
Snake & 11 & 12 & 852 & 6.10 & 21 & 20 & 61 & 8 & 689 & 24 & 29.78 \\
Sokoban & 20 & 10 & 892 & 9.56 & 64 & 125 & 182 & 50 & 35 & 14 & 72.56 \\
Termes & 1 & 4 & 335 & 0.88 & 13 & 10 & 23 & 3 & 29 & 7 & 2.90 \\
Tetris & 2 & 4 & 337 & 1.68 & 6 & 4 & 27 & 3 & 34 & 24 & 3.99 \\
Woodworking & 1 & 4 & 296 & 2.96 & 15 & 6 & 16 & 1 & 7 & 5 & 2.83 \\
\bottomrule
\end{tabular*}
\par
\ifdefined\MainResultsCombined\else
\end{table}
\fi
\makeatother
\fi

\end{minipage}\hfill
\begin{minipage}[t]{\columnwidth}
\input{tables/test_results}
\end{minipage}
\end{table*}

\begin{figure*}[t]
\input{plots/cumulative_resources}
\caption{Solved tasks with total wall time (left) or peak memory (right) at most the horizontal-axis value, across 900 IPC-learning and 990 Autoscale tasks (linear axes).
IP+Ch stores explored choice points; IP+All stores all explored augmented states.}
\label{fig:cumulative-resources}
\Description{Two cumulative step plots show solved-task counts against total time from 0 to 1800 seconds and peak memory from 0 to 8000 MiB, with six curves for LAMA, BFWS, Levitron, IP, IP+Ch, and IP+All, combining IPC-learning and Autoscale Agile.}
\end{figure*}

\paragraph{Adversary.}
The adversary proposes task-generator settings intended to expose weaknesses, for example the number of boxes and the size of the layout in Sokoban.
A task enters the training set only if a planner finds a plan for it, which establishes solvability independently of the candidate policy, and only if the policy then fails on it or reaches a resource limit.
Earlier tasks are rechecked before each new request, so that a revision which breaks one of them is detected.
The adversary sees the outcome of every setting that it has tried, which helps it select tasks that challenge the policy while remaining small enough for the symbolic checks.

Returning a policy does not imply that every retained task has been verified, since checks that remain inconclusive are reported as such.
Proposal and repair budgets bound the learning process, and exhausting them can end a run without a returned policy.

\section{Experiments}

We address three questions.
Does the learning loop produce structurally terminating policies that use choose rules where plan existence is hard?
Do the learned policies generalize to the test tasks, and how do they compare with classical planners in coverage, time, and memory?
How much do the learned policies search, and what does storing the explored choice points gain?

\subsection{Setup}

\paragraph{Benchmarks.}
Our first benchmark, IPC-learning, is the test set of the IPC 2023 Learning Track~\cite{taitler-et-al-aimag2024}.
Each domain has 30 easy, 30 medium, and 30 hard test tasks, which gives 900 tasks in total.
Policies are learned from generated training tasks, and the test tasks are used only for evaluation.
The second benchmark, Autoscale, comprises 33 domains from the Autoscale Agile suite~\cite{torralba-et-al-icaps2021} for which PDDL task generators are available.
It contains 30 tasks per domain and 990 tasks in total.
Training tasks are much smaller than test tasks: outside Sokoban and Nomystery, no training task has more than 65 objects, whereas test tasks have up to 19,615 objects.
Peg Solitaire is the exception, where every task has 33 objects.
The supplementary material lists the sizes per domain.

\paragraph{Learning.}
Both agents use \texttt{gpt-6-astra} with \texttt{max} reasoning effort.
We run the loop once per domain, and all runs finish within 20 outer iterations.
Each generator call allows five proposals, each with five syntax-repair attempts; each adversary call allows three proposals.
Each adversary proposal specifies three configurations with 30 seeds each.
Execution, verification, and solvability checks each allow 60~seconds and $10^6$ states per task; execution uses one rollout.
Structural termination checks allow 12 relevant features per cyclic component of the mode graph after preprocessing.

\paragraph{Evaluation.}
Each test task uses one thread, with limits of 30~minutes and 8000~MiB.
We compare with LAMA~\cite{richter-westphal-jair2010}, BFWS~\cite{lipovetzky-geffner-aaai2017}, and Levitron~\cite{correa-et-al-ipc2023a}.
IP runs the learned policies with the scout of Section~\ref{sec:choice}, which stores only its current path.
Two variants of the scout spend space to save time.
IP+Ch also stores every \emph{choice point} that it has explored, that is, every augmented state of an $\mathrm{OR}$ vertex, so that it never searches a failed subtree again.
IP+All stores every explored augmented state.
The stored set can grow exponentially, so both variants give up the space bound of Corollary~\ref{cor:space}, but IP+Ch stores far fewer states than IP+All.

\appto{\appendixprelim}{\nobalance\raggedbottom}
\begin{toappendix}
Table~\ref{tab:task-sizes} compares the number of objects of the training and test tasks.
Table~\ref{tab:ablation-learning-results} reports learning effort and policy sizes for the ablation without queries.
Table~\ref{tab:ablation-test-results} compares test coverage for the main and ablation policies under IP, IP+Ch, and IP+All, together with LAMA, BFWS, and Levitron.

Without queries, the learned policies use choose rules in four additional domains: Rovers in both suites, and Data-network, Tidybot, and TPP in Autoscale.
All four domains are tractable, and the policies learned with queries avoid choose rules in each of them.
The additional choose rules thus introduce unnecessary search, indicating insufficient feature expressivity without queries.
The ablation also has two unsuccessful learning runs, both in Autoscale: Mprime reaches the iteration limit, and Sokoban is cancelled after roughly 24 hours.
In both cases, we test the last available policy before learning stops.

\begin{table}[!htbp]
\caption{Number of objects of the training and test tasks.
Training tasks are all tasks that the adversary retained during learning, and $n$ counts them.
Med and max give the median and the maximum number of objects per task.}
\label{tab:task-sizes}
\begin{tabular*}{\columnwidth}{@{\extracolsep{\fill}}lrrrrr@{}}
\toprule
& \multicolumn{3}{c}{Training} & \multicolumn{2}{c}{Test} \\
\cmidrule(lr){2-4}\cmidrule(l){5-6}
Domain & $n$ & Med & Max & Med & Max \\
\midrule
\multicolumn{6}{@{}l}{\textbf{(a) IPC-learning}} \\
\cmidrule[\lightrulewidth](lr){1-6}
Blocksworld & 360 & 11 & 27 & 90 & 488 \\
Childsnack & 360 & 25 & 28 & 126 & 1326 \\
Ferry & 360 & 19 & 26 & 88 & 1461 \\
Floortile & 339 & 16 & 22 & 234 & 1022 \\
Miconic & 360 & 14 & 24 & 94 & 681 \\
Rovers & 360 & 21 & 30 & 149 & 596 \\
Satellite & 360 & 18 & 23 & 100 & 402 \\
Sokoban & 1331 & 127 & 578 & 1210 & 9880 \\
Spanner & 335 & 20 & 27 & 124 & 833 \\
Transport & 360 & 18 & 26 & 74 & 354 \\
\cmidrule[\lightrulewidth](lr){1-6}
\multicolumn{6}{@{}l}{\textbf{(b) Autoscale}} \\
\cmidrule[\lightrulewidth](lr){1-6}
Agricola & 328 & 29 & 48 & 54 & 76 \\
Barman & 350 & 21 & 31 & 84 & 158 \\
Blocksworld & 359 & 16 & 40 & 80 & 151 \\
Childsnack & 360 & 24 & 28 & 104 & 143 \\
Data-network & 276 & 31 & 36 & 192 & 262 \\
Depots & 353 & 21 & 25 & 256 & 477 \\
Driverlog & 360 & 23 & 42 & 294 & 542 \\
Elevators & 360 & 20 & 32 & 120 & 214 \\
Floortile & 344 & 17 & 27 & 500 & 1726 \\
Freecell & 1041 & 38 & 65 & 80 & 113 \\
Grid & 360 & 16 & 22 & 5418 & 19615 \\
Gripper & 12 & 13 & 21 & 96 & 169 \\
Hiking & 418 & 18 & 43 & 30 & 44 \\
Logistics & 360 & 18 & 28 & 1014 & 2376 \\
Miconic & 360 & 17 & 26 & 154 & 279 \\
Mprime & 848 & 18 & 24 & 62 & 98 \\
Nomystery & 356 & 24 & 78 & 640 & 981 \\
Parking & 346 & 13 & 19 & 74 & 127 \\
Pegsol & 352 & 33 & 33 & 33 & 33 \\
Rovers & 360 & 24 & 34 & 48 & 148 \\
Satellite & 360 & 19 & 32 & 264 & 507 \\
Scanalyzer & 10 & 16 & 20 & 74 & 132 \\
Snake & 852 & 20 & 27 & 89 & 182 \\
Sokoban & 898 & 59 & 91 & 113 & 170 \\
Storage & 341 & 24 & 32 & 132 & 300 \\
Termes & 335 & 11 & 17 & 53 & 107 \\
Tetris & 341 & 31 & 56 & 124 & 212 \\
Tidybot & 560 & 29 & 35 & 40 & 56 \\
TPP & 360 & 16 & 20 & 69 & 124 \\
Transport & 338 & 23 & 32 & 122 & 255 \\
Visitall & 360 & 26 & 36 & 2552 & 8836 \\
Woodworking & 345 & 36 & 46 & 172 & 238 \\
Zenotravel & 360 & 19 & 22 & 125 & 217 \\
\bottomrule
\end{tabular*}
\end{table}

\makeatletter
\begin{table}[!htbp]
\caption{Learning without queries on IPC-learning (top) and Autoscale (bottom).
All finalized policies were proven structurally terminating.
Columns follow Table~\ref{tab:learning-results}, with T in hours.
$\dagger$ marks interrupted learning: Mprime reached the iteration limit, and Autoscale Sokoban was cancelled after roughly 24 hours.
Available partial statistics are shown; -- denotes unavailable statistics.
In each suite, policies in the lower block use choose rules.}
\label{tab:ablation-learning-results}
\setlength{\tabcolsep}{2pt}
\renewcommand{\arraystretch}{0.9}
\begin{tabular*}{\columnwidth}{@{\extracolsep{\fill}}lrrrrrrrrrrr@{}}
\toprule
Domain & G & A & P & T & F & M & R & Ch & $d$ & $w$ & USD \\
\midrule
\multicolumn{12}{@{}l}{\textbf{(a) IPC-learning}\begingroup\phantomsection\edef\@currentlabel{\thetable a}\label{tab:ipc-learning-ablation-policies}\endgroup} \\
\cmidrule[\lightrulewidth](lr){1-12}
Blocksworld & 1 & 4 & 359 & 0.29 & 5 & 4 & 5 & 0 & 0 & 0 & 1.08 \\
Childsnack & 1 & 4 & 240 & 0.83 & 10 & 7 & 9 & 0 & 0 & 0 & 1.52 \\
Ferry & 1 & 4 & 336 & 0.35 & 4 & 4 & 5 & 0 & 0 & 0 & 1.12 \\
Floortile & 1 & 4 & 329 & 0.77 & 7 & 5 & 11 & 0 & 0 & 0 & 1.49 \\
Miconic & 1 & 4 & 360 & 0.32 & 4 & 4 & 8 & 0 & 0 & 0 & 1.29 \\
Satellite & 1 & 4 & 288 & 0.74 & 11 & 9 & 13 & 0 & 0 & 0 & 1.41 \\
Spanner & 1 & 4 & 330 & 0.88 & 4 & 2 & 4 & 0 & 0 & 0 & 1.22 \\
Transport & 1 & 4 & 360 & 0.39 & 9 & 6 & 8 & 0 & 0 & 0 & 1.28 \\
\cmidrule[\lightrulewidth](lr){1-12}
Rovers & 4 & 7 & 612 & 2.28 & 37 & 51 & 91 & 15 & 160 & 19 & 13.72 \\
Sokoban & 5 & 8 & 643 & 5.82 & 59 & 62 & 94 & 12 & 138 & 6 & 15.04 \\
\cmidrule[\heavyrulewidth](lr){1-12}
\multicolumn{12}{@{}l}{\textbf{(b) Autoscale}\begingroup\phantomsection\edef\@currentlabel{\thetable b}\label{tab:autoscale-ablation-policies}\endgroup} \\
\cmidrule[\lightrulewidth](lr){1-12}
Barman & 1 & 4 & 352 & 0.90 & 9 & 17 & 18 & 0 & 0 & 0 & 1.56 \\
Blocksworld & 1 & 4 & 359 & 0.12 & 5 & 4 & 5 & 0 & 0 & 0 & 1.40 \\
Childsnack & 1 & 4 & 270 & 0.52 & 11 & 8 & 10 & 0 & 0 & 0 & 1.45 \\
Depots & 1 & 4 & 292 & 1.12 & 8 & 7 & 8 & 0 & 0 & 0 & 1.41 \\
Driverlog & 1 & 4 & 355 & 0.70 & 13 & 21 & 32 & 0 & 0 & 0 & 2.91 \\
Elevators & 1 & 4 & 356 & 0.29 & 10 & 7 & 15 & 0 & 0 & 0 & 1.76 \\
Floortile & 1 & 4 & 345 & 0.65 & 8 & 5 & 10 & 0 & 0 & 0 & 1.73 \\
Grid & 1 & 4 & 274 & 1.25 & 8 & 8 & 13 & 0 & 0 & 0 & 2.11 \\
Gripper & 1 & 4 & 8 & 0.09 & 3 & 5 & 5 & 0 & 0 & 0 & 1.13 \\
Hiking & 2 & 5 & 450 & 0.85 & 16 & 21 & 28 & 0 & 0 & 0 & 3.49 \\
Logistics & 1 & 4 & 334 & 0.54 & 9 & 14 & 23 & 0 & 0 & 0 & 1.97 \\
Miconic & 1 & 4 & 360 & 0.19 & 4 & 4 & 8 & 0 & 0 & 0 & 1.32 \\
Mprime$^\dagger$ & 20 & 4 & -- & 5.32 & -- & -- & -- & -- & -- & -- & 49.48 \\
Parking & 1 & 4 & 218 & 0.92 & 19 & 17 & 33 & 0 & 0 & 0 & 2.06 \\
Satellite & 1 & 4 & 298 & 0.58 & 11 & 8 & 11 & 0 & 0 & 0 & 2.14 \\
Scanalyzer & 1 & 4 & 7 & 0.40 & 13 & 34 & 49 & 0 & 0 & 0 & 2.37 \\
Storage & 1 & 4 & 300 & 1.42 & 14 & 9 & 15 & 0 & 0 & 0 & 2.53 \\
Transport & 1 & 4 & 331 & 0.84 & 10 & 6 & 8 & 0 & 0 & 0 & 1.34 \\
Visitall & 1 & 4 & 278 & 0.83 & 3 & 1 & 2 & 0 & 0 & 0 & 1.00 \\
Zenotravel & 1 & 4 & 353 & 0.39 & 11 & 11 & 17 & 0 & 0 & 0 & 1.57 \\
\cmidrule[\lightrulewidth](lr){1-12}
Agricola & 1 & 4 & 319 & 2.22 & 26 & 12 & 33 & 3 & 45 & 14 & 4.70 \\
Data-network & 1 & 4 & 293 & 1.97 & 19 & 15 & 25 & 2 & 8 & 7 & 3.47 \\
Freecell & 13 & 14 & 1097 & 4.04 & 24 & 26 & 112 & 14 & 134 & 9 & 39.98 \\
Nomystery & 4 & 6 & 508 & 2.20 & 9 & 95 & 187 & 23 & 17 & 23 & 10.88 \\
Pegsol & 1 & 4 & 308 & 1.04 & 10 & 6 & 7 & 3 & 43 & 24 & 1.92 \\
Rovers & 4 & 6 & 540 & 1.20 & 22 & 45 & 102 & 18 & 46 & 383 & 8.92 \\
Snake & 5 & 6 & 501 & 1.82 & 7 & 5 & 10 & 4 & 163 & 23 & 8.96 \\
Sokoban$^\dagger$ & 17 & 10 & -- & -- & -- & -- & -- & -- & -- & -- & -- \\
Termes & 1 & 4 & 331 & 1.31 & 15 & 10 & 23 & 2 & 45 & 8 & 3.48 \\
Tetris & 19 & 10 & 786 & 17.45 & 41 & 17 & 71 & 1 & 65 & 30 & 52.17 \\
Tidybot & 6 & 9 & 480 & 12.45 & 56 & 79 & 157 & 4 & 29 & 11 & 24.53 \\
TPP & 1 & 4 & 360 & 0.36 & 14 & 10 & 15 & 1 & 15 & 25 & 2.45 \\
Woodworking & 1 & 4 & 295 & 2.83 & 16 & 9 & 27 & 2 & 8 & 8 & 3.33 \\
\bottomrule
\end{tabular*}
\end{table}
\makeatother

\makeatletter
\begin{table*}[!tp]
\caption{Testing the main policies and the ablation without queries on IPC-learning (top) and Autoscale (bottom).
For ablation runs ending in failure or timeout (Autoscale Mprime and Sokoban), we test the last available policy.
S counts solved tasks; bold marks maximum coverage, including ties.
IP stores no visited states, IP+Ch stores explored choice points, and IP+All stores every explored augmented state.
Both $d,w$ pairs give maxima over tasks solved by all six policy configurations, independently of the baselines.
Within each family, all three modes have identical maxima on these tasks; -- denotes an empty common solved set.
The domain blocks follow the main policies: policies in the lower block use choose rules.}
\label{tab:ablation-test-results}
\setlength{\tabcolsep}{2pt}
\renewcommand{\arraystretch}{0.82}
\begin{tabular*}{\textwidth}{@{\extracolsep{\fill}}l*{13}{c}@{}}
\toprule
 & \multicolumn{3}{c}{Baselines} & \multicolumn{5}{c}{Main} & \multicolumn{5}{c}{Ablation} \\
\cmidrule(lr){2-4}\cmidrule(lr){5-9}\cmidrule(lr){10-14}
Domain & LAMA & BFWS & Levitron & IP & IP+Ch & IP+All & \multicolumn{2}{c}{Shared} & IP & IP+Ch & IP+All & \multicolumn{2}{c}{Shared} \\
\cmidrule(lr){2-2}\cmidrule(lr){3-3}\cmidrule(lr){4-4}\cmidrule(lr){5-5}\cmidrule(lr){6-6}\cmidrule(lr){7-7}\cmidrule(lr){8-9}\cmidrule(lr){10-10}\cmidrule(lr){11-11}\cmidrule(lr){12-12}\cmidrule(lr){13-14}
 & S & S & S & S & S & S & $d$ & $w$ & S & S & S & $d$ & $w$ \\
\midrule
\multicolumn{14}{@{}l}{\textbf{(a) IPC-learning (90 tasks/domain)}\begingroup\phantomsection\edef\@currentlabel{\thetable a}\label{tab:ipc-learning-ablation-coverage}\endgroup} \\
\cmidrule[\lightrulewidth](lr){1-14}
Blocksworld & 61 & 36 & 61 & \textbf{90} & \textbf{90} & \textbf{90} & 0 & 0 & \textbf{90} & \textbf{90} & \textbf{90} & 0 & 0 \\ %
Childsnack & 35 & 35 & 54 & \textbf{90} & \textbf{90} & \textbf{90} & 0 & 0 & \textbf{90} & \textbf{90} & \textbf{90} & 0 & 0 \\ %
Ferry & 69 & 66 & 63 & \textbf{90} & \textbf{90} & \textbf{90} & 0 & 0 & \textbf{90} & \textbf{90} & \textbf{90} & 0 & 0 \\ %
Floortile & 11 & 13 & 38 & \textbf{90} & \textbf{90} & \textbf{90} & 0 & 0 & \textbf{90} & \textbf{90} & \textbf{90} & 0 & 0 \\ %
Miconic & \textbf{90} & \textbf{90} & \textbf{90} & \textbf{90} & \textbf{90} & \textbf{90} & 0 & 0 & \textbf{90} & \textbf{90} & \textbf{90} & 0 & 0 \\ %
Rovers & 70 & 65 & 67 & \textbf{90} & \textbf{90} & \textbf{90} & 0 & 0 & \textbf{90} & \textbf{90} & \textbf{90} & 2309 & 313 \\ %
Satellite & 89 & 65 & \textbf{90} & \textbf{90} & \textbf{90} & \textbf{90} & 0 & 0 & \textbf{90} & \textbf{90} & \textbf{90} & 0 & 0 \\ %
Spanner & 30 & 30 & 59 & \textbf{90} & \textbf{90} & \textbf{90} & 0 & 0 & \textbf{90} & \textbf{90} & \textbf{90} & 0 & 0 \\ %
Transport & 66 & 63 & 64 & \textbf{90} & \textbf{90} & \textbf{90} & 0 & 0 & \textbf{90} & \textbf{90} & \textbf{90} & 0 & 0 \\ %
\cmidrule[\lightrulewidth](lr){1-14}
Subtotal (810) & 521 & 463 & 586 & \textbf{810} & \textbf{810} & \textbf{810} &  &  & \textbf{810} & \textbf{810} & \textbf{810} &  &  \\
\cmidrule[\lightrulewidth](lr){1-14}
Sokoban & 41 & \textbf{46} & 44 & 40 & 41 & 42 & 7 & 7 & 30 & 41 & 43 & 215 & 7 \\ %
\cmidrule[\lightrulewidth](lr){1-14}
Total (900) & 562 & 509 & 630 & 850 & 851 & 852 &  &  & 840 & 851 & \textbf{853} &  &  \\
\cmidrule[\heavyrulewidth](lr){1-14}
\multicolumn{14}{@{}l}{\textbf{(b) Autoscale (30 tasks/domain)}\begingroup\phantomsection\edef\@currentlabel{\thetable b}\label{tab:autoscale-ablation-coverage}\endgroup} \\
\cmidrule[\lightrulewidth](lr){1-14}
Barman & 26 & 8 & 28 & \textbf{30} & \textbf{30} & \textbf{30} & 0 & 0 & \textbf{30} & \textbf{30} & \textbf{30} & 0 & 0 \\ %
Blocksworld & 27 & 10 & 29 & \textbf{30} & \textbf{30} & \textbf{30} & 0 & 0 & \textbf{30} & \textbf{30} & \textbf{30} & 0 & 0 \\ %
Childsnack & 9 & 7 & 21 & \textbf{30} & \textbf{30} & \textbf{30} & 0 & 0 & \textbf{30} & \textbf{30} & \textbf{30} & 0 & 0 \\ %
Data-network & 20 & 13 & 22 & \textbf{30} & \textbf{30} & \textbf{30} & 0 & 0 & \textbf{30} & \textbf{30} & \textbf{30} & 74 & 47 \\ %
Depots & 22 & 21 & 17 & \textbf{30} & \textbf{30} & \textbf{30} & 0 & 0 & \textbf{30} & \textbf{30} & \textbf{30} & 0 & 0 \\ %
Driverlog & \textbf{30} & \textbf{30} & 28 & \textbf{30} & \textbf{30} & \textbf{30} & 0 & 0 & \textbf{30} & \textbf{30} & \textbf{30} & 0 & 0 \\ %
Elevators & \textbf{30} & \textbf{30} & \textbf{30} & \textbf{30} & \textbf{30} & \textbf{30} & 0 & 0 & \textbf{30} & \textbf{30} & \textbf{30} & 0 & 0 \\ %
Floortile & 2 & 2 & 8 & \textbf{30} & \textbf{30} & \textbf{30} & 0 & 0 & \textbf{30} & \textbf{30} & \textbf{30} & 0 & 0 \\ %
Grid & \textbf{20} & 10 & \textbf{20} & 16 & 16 & 15 & 0 & 0 & 13 & 13 & 13 & 0 & 0 \\ %
Gripper & \textbf{30} & \textbf{30} & \textbf{30} & \textbf{30} & \textbf{30} & \textbf{30} & 0 & 0 & \textbf{30} & \textbf{30} & \textbf{30} & 0 & 0 \\ %
Hiking & 23 & 2 & 21 & \textbf{30} & \textbf{30} & \textbf{30} & 0 & 0 & \textbf{30} & \textbf{30} & \textbf{30} & 0 & 0 \\ %
Logistics & 18 & 10 & 15 & \textbf{30} & \textbf{30} & \textbf{30} & 0 & 0 & \textbf{30} & \textbf{30} & \textbf{30} & 0 & 0 \\ %
Miconic & \textbf{30} & \textbf{30} & \textbf{30} & \textbf{30} & \textbf{30} & \textbf{30} & 0 & 0 & \textbf{30} & \textbf{30} & \textbf{30} & 0 & 0 \\ %
Parking & 28 & 25 & 21 & \textbf{30} & \textbf{30} & \textbf{30} & 0 & 0 & \textbf{30} & \textbf{30} & \textbf{30} & 0 & 0 \\ %
Rovers & \textbf{30} & \textbf{30} & \textbf{30} & \textbf{30} & \textbf{30} & \textbf{30} & 0 & 0 & \textbf{30} & \textbf{30} & \textbf{30} & 140 & 29 \\ %
Satellite & 24 & 13 & \textbf{30} & \textbf{30} & \textbf{30} & \textbf{30} & 0 & 0 & \textbf{30} & \textbf{30} & \textbf{30} & 0 & 0 \\ %
Scanalyzer & 24 & 21 & \textbf{30} & \textbf{30} & \textbf{30} & \textbf{30} & 0 & 0 & \textbf{30} & \textbf{30} & \textbf{30} & 0 & 0 \\ %
Storage & 16 & 26 & 17 & \textbf{30} & \textbf{30} & \textbf{30} & 0 & 0 & \textbf{30} & \textbf{30} & \textbf{30} & 0 & 0 \\ %
Tidybot & 17 & 13 & 22 & \textbf{30} & \textbf{30} & \textbf{30} & 0 & 0 & \textbf{30} & \textbf{30} & \textbf{30} & 65 & 13 \\ %
TPP & 25 & 12 & 24 & \textbf{30} & \textbf{30} & \textbf{30} & 0 & 0 & \textbf{30} & \textbf{30} & \textbf{30} & 28 & 55 \\ %
Transport & 12 & 20 & 12 & \textbf{30} & \textbf{30} & \textbf{30} & 0 & 0 & \textbf{30} & \textbf{30} & \textbf{30} & 0 & 0 \\ %
Visitall & \textbf{30} & 25 & \textbf{30} & \textbf{30} & \textbf{30} & \textbf{30} & 0 & 0 & \textbf{30} & \textbf{30} & \textbf{30} & 0 & 0 \\ %
Zenotravel & 20 & 16 & 16 & \textbf{30} & \textbf{30} & \textbf{30} & 0 & 0 & \textbf{30} & \textbf{30} & \textbf{30} & 0 & 0 \\ %
\cmidrule[\lightrulewidth](lr){1-14}
Subtotal (690) & 513 & 404 & 531 & \textbf{676} & \textbf{676} & 675 &  &  & 673 & 673 & 673 &  &  \\
\cmidrule[\lightrulewidth](lr){1-14}
Agricola & \textbf{30} & \textbf{30} & \textbf{30} & \textbf{30} & \textbf{30} & \textbf{30} & 49 & 9 & \textbf{30} & \textbf{30} & \textbf{30} & 72 & 12 \\ %
Freecell & 28 & \textbf{30} & 29 & 10 & 12 & 14 & 582 & 12 & 14 & 22 & 24 & 78 & 10 \\ %
Mprime & 9 & 12 & 24 & 27 & \textbf{28} & \textbf{28} & 687 & 25 & 21 & 21 & 21 & 0 & 0 \\ %
Nomystery & 10 & 14 & \textbf{22} & 1 & 5 & 5 & 28 & 4 & 1 & 2 & 2 & 29 & 4 \\ %
Pegsol & \textbf{30} & \textbf{30} & \textbf{30} & 27 & \textbf{30} & \textbf{30} & 17 & 11 & 27 & 29 & 29 & 29 & 17 \\ %
Snake & 9 & 18 & 16 & 23 & 23 & 23 & 11530 & 15 & 28 & \textbf{29} & \textbf{29} & 1398 & 63 \\ %
Sokoban & 22 & 24 & \textbf{27} & 3 & 3 & 3 & -- & -- & 0 & 3 & 3 & -- & -- \\ %
Termes & \textbf{22} & 12 & 20 & 6 & 10 & 10 & 60 & 17 & 7 & 10 & 10 & 62 & 17 \\ %
Tetris & 19 & 26 & 23 & 26 & \textbf{28} & \textbf{28} & 28 & 96 & 15 & 17 & 17 & 90 & 33 \\ %
Woodworking & 14 & 5 & 16 & \textbf{30} & \textbf{30} & \textbf{30} & 119 & 47 & 19 & 20 & 20 & 55 & 59 \\ %
\cmidrule[\lightrulewidth](lr){1-14}
Subtotal (300) & 193 & 201 & \textbf{237} & 183 & 199 & 201 &  &  & 162 & 183 & 185 &  &  \\
\cmidrule[\lightrulewidth](lr){1-14}
Total (990) & 706 & 605 & 768 & 859 & 875 & \textbf{876} &  &  & 835 & 856 & 858 &  &  \\
\bottomrule
\end{tabular*}
\end{table*}
\makeatother

\end{toappendix}

\subsection{Results}

\paragraph{Learning.}
Table~\ref{tab:learning-results} shows that the learned policies avoid choose rules in domains where plan existence is in P, such as Blocksworld, Logistics, Rovers, and Satellite~\cite{helmert-aij2003,helmert-icaps2006}.
They use choose rules where plan existence is NP-complete, as in Freecell and Mprime~\cite{helmert-aij2003}, or PSPACE-complete, as in Sokoban~\cite{culberson-tr1997}.
Most policies without choose rules need one generator proposal (USD 1.07--2.46); Freecell, Mprime, Snake, and Sokoban need 11--20 generator proposals (USD 29.78--72.56).
Every policy is proven structurally terminating and proven to solve its training tasks (column P).
The ablation in the supplementary material removes queries and introduces choose rules in Tidybot, Rovers, Data-network, and TPP, all of which are tractable and contain predicates of arity greater than two.
Queries provide access to these predicates, avoiding unnecessary search caused by limited feature expressivity.

\paragraph{Testing.}
Table~\ref{tab:test-results} shows that IP solves 850 of 900 IPC-learning tasks and 859 of 990 Autoscale tasks, compared with 630 and 768 for Levitron, the strongest baseline overall.
Without choose rules, the scout never backtracks, and IP solves all 810 IPC-learning tasks and 676 of 690 Autoscale tasks.
In the Autoscale domains with choose rules, IP matches or exceeds the best baseline in 5 of 10 domains, despite choice depths up to $11{,}530$, choice widths up to 100, and unordered candidates.
It falls far behind in Freecell, Nomystery, Sokoban, and Termes.
PSPACE-complete Sokoban is outside our scope: unless NP${}={}$PSPACE, no structurally terminating policy solves all its tasks (Section~\ref{subsec:completeness}).
In Figure~\ref{fig:cumulative-resources}, IP solves most tasks within one second and 100~MiB, never exceeding 1,400~MiB; baselines keep gaining coverage up to 30~minutes and 8000~MiB.
This reflects the space bound of Corollary~\ref{cor:space}: the scout stores only its current path.
IP+Ch solves 1 and 16 more tasks than IP, with peak memory within 2\% of that of IP on 90\% of tasks and at most 291~MiB higher.
IP+All solves 852 and 876 tasks, one more per benchmark than IP+Ch, but uses up to 1,819~MiB more than IP.

\section{Conclusions}

We set out to learn search control for intractable planning domains, rather than policies that decide a plan outright.
Choose rules mark exactly the decisions that need search, and structural termination bounds every trajectory by a polynomial in the size of the task.
A depth-first scout therefore finds a plan in polynomial space, at a cost in time that is exponential only in the choice depth.

We learned such policies with the counterexample-guided loop of previous work, extended with choose rules and with relational-algebra queries.
With them, the scout solves more test tasks than LAMA, BFWS, and Levitron on both benchmarks, most within one second and 100~MiB.
Storing explored choice points solves a few more tasks at little extra memory, but gives up the space bound.

Our guarantees have limits.
Structural termination bounds the search but does not guarantee that it succeeds: a learned policy is proven to solve only its training tasks, and it can still fail on a task outside them.
There is also a hard limit, since the bound that gives polynomial space caps the length of the plans that such a policy can produce.
A class that such a policy solves therefore lies in NP, which excludes Towers of Hanoi outright, and Sokoban unless NP${}={}$PSPACE.
A natural next step is to learn policies with modules and calls, and to extend the depth bound to them.

\restoresection
\bibliographystyle{ACM-Reference-Format}
\bibliography{abbrv,literatur,crossref,extra,followup}
\end{document}